\title{Learning Cooperation and Online Planning Through Simulation and Graph Convolutional Network}
\author{Rafid Ameer Mahmud}
\affiliation{
    \institution{University of Dhaka}}
\email{rafidameer-2016814404@cs.du.ac.bd}
\author{Fahim Faisal}
\affiliation{
    \institution{University of Dhaka}}
\email{fahim-2016714432@cs.du.ac.bd}
\author{Saaduddin Mahmud}
\affiliation{
  \institution{University of Massachusetts Amherst}}
\email{smahmud@umass.edu}
\author{Md. Mosaddek Khan}
\affiliation{
    \institution{University of Dhaka}}
\email{mosaddek@du.ac.bd}
\begin{abstract}
Multi-agent Markov Decision Process (MMDP) has been an effective way of modelling sequential decision making algorithms for multi-agent cooperative environments. A number of algorithms based on centralized and decentralized planning have been developed in this domain. However, dynamically changing environment, coupled with exponential size of the state and joint action space, make it difficult for these algorithms to provide both efficiency and scalability. Recently, Centralized planning algorithm FV-MCTS-MP and decentralized planning algorithm \textit{Alternate maximization with Behavioural Cloning} (ABC) have achieved notable performance in solving MMDPs. 
However, they are not capable of adapting to dynamically changing environments and accounting for the lack of communication among agents, respectively. Against this background, we introduce a simulation based online planning algorithm, that we call SiCLOP, for multi-agent cooperative environments. Specifically, SiCLOP tailors Monte Carlo Tree Search (MCTS) and uses Coordination Graph (CG) and Graph Neural Network (GCN) to learn cooperation and provides real time solution of a MMDP problem. It also improves scalability through an effective pruning of action space. Additionally, unlike FV-MCTS-MP and ABC, SiCLOP supports transfer learning, which enables learned agents to operate in different environments. We also provide theoretical discussion about the convergence property of our algorithm within the context of multi-agent settings. Finally, our extensive empirical results show that SiCLOP significantly outperforms the state-of-the-art online planning algorithms.  


\end{abstract}
\keywords{Cooperation Learning, Multi-agent Markov Decision Process, Transfer Learning, Graph Convolutional Network}
\newcommand{\BibTeX}{\rm B\kern-.05em{\sc i\kern-.025em b}\kern-.08em\TeX}
\begin{document}


\pagestyle{fancy}
\fancyhead{}


\maketitle 


\section{Introduction}

    Sequential decision making models for multi-agent environments hold the key to many real life problems such as autonomous vehicles

    \cite{shalev2016safe}, controlling robots \cite{kober2013reinforcement-controllingrobots}, resource allocation \cite{yang2018mean-resourceallocation}, games with multiple types of agents such as Starcraft II \cite{rashid2018qmix-starcraft} and so on. Cooperation among agents is an essential part of these problems. The agents need to learn collaboration in order to work together towards a common goal. This adds to the difficulty of making decisions because the agents must interact not only with the environment, but also with one another. As a result, they need to adapt to the policies of other agents to learn cooperation. Another fundamental challenge of this domain is the curse of dimensionality. 
   With the increase in the number of agents, the joint action space and number of states grows exponentially, rendering single agent planning algorithms inefficient in these cases.
    In multi-agent systems, the sequential decision making problem can be modeled as a variant of Markov Decision Process (MDP) \cite{sutton2011reinforcement-sutton}, called Multi-agent Markov Decision Process (MMDP) \cite{boutilier1996planning-mmdp}. \par

    Over the years, a number of algorithms have been proposed to solve MMDPs. They are broadly divided into two categories. Centralized planning and decentralized planning. To solve MMDPs through centralized planning, the naive approach is to extend the idea of single agent algorithms by treating all the agents as a single agent and combining their joint action space to represent the action space. However, due to the curse of dimensionality, this naive approach fails to scale up to large environments. Therefore, a different approach, decentralized planning, has been widely used to solve MMDPs. This is accomplished by decomposing the multi-agent characteristics through decentralizing their value function. Best et al. \cite{best2019dec} proposed a decentralized online planning algorithm to solve the dimensionality problem by using parallel MCTS trees and periodic communication. On the other hand, Aleksander et. al. \cite{czechowski2020decentralized} introduced an algorithm ABC, where agents do not use communication, rather they train their individual policy prediction functions one at a time and try to induce cooperation by using the learned agent models. However, ABC experiences poor performance since, because of the lack of communication, it is unable to account for penalties generated by agent interaction (see Section 4).

    
    
    Guestrin et al. \cite{guestrin2001multiagent} showed that communication among agents can be represented as a coordination graph (CG) and joint value functions can be estimated from the higher order value factorization. This encouraged other approaches that use coordination graph to learn value functions or for policy generation. Choudhury et al. \cite{choudhury2021scalable} proposed simulation based \textit{anytime online} planning algorithm FV-MCTS-MP, where factored value function is used with CG to incorporate communication among agents. The proposed solution solves both the cooperation and dimensionality problem but is not applicable in dynamically changing environments. In such cases, the state space grows exponentially as the size of the environment rises, limiting FV-MCTS-MP's ability to calculate state values and deliver good solutions given limited computational resources.

      \par
    
    Against this background, we introduce a MCTS based multi-agent anytime planning algorithm, SiCLOP, that can scale for large environments and provide realtime action selection capabilities. SiCLOP uses Graph Convolutional Network (GCN) \cite{kipf2016semi} for individual agent policy prediction and propagates the updated policies to other agents through  Coordination Graphs (CG). The agents use only their local information for policy prediction and training. This enables them to learn cooperation in a small environments and transfer that knowledge to a larger setting, making the training easier and faster. We also introduce an effective pruning method to reduce the size of the joint action space by sampling only a small subset of all possible joint actions using Gibbs sampling \cite{geman1984stochastic} and iteratively updating the actions of every individual agent. Finally, we evaluate our algorithm empirically, and the result depicts that our algorithm outperforms the state-of-the-art online planning algorithms in terms of solution quality, adaptability, robustness and scalability.
    
  \par

    

\section{Problem Formulation and Background}

In this section, we first formulate the MMDP problem. We then review the general ideas of solving MMDPs with $online planning$. Afterwards, we discuss the impact of Coordination Graph (CG) and graph based neural networks in improving the solution quality and adaptability, 

	\subsection*{Multi-agent Markov Decision Process, MMDP}
	    An MMDP can be formalized as a tuple $(\mathcal{N},S,A,\tau,\mathcal{U})$. $\mathcal{N}$ is the set of all agents, indexed by $1,....,n$. We use $-i$ to refer to all other agents except $i$. $S$ is the set of all possible states and $A = A_{1} \times ... \times A_{n}$, is a set of action spaces where $A_{i}$ is the set of actions for agent $i \in \mathcal{N}$. A joint action $a \in A$ is the combination of individual actions of all agents, $a = \langle a_{1},...,a_{n} \rangle , a_{i}\in A_{i}$. $\tau : S \times A \rightarrow S$ is a transition function that takes a joint action and changes the state according to the deployed environment specifications. $\mathcal{U} = (u_{i},...,u_{n})$ is the global utility function and for every agent $i \in \mathcal{N}, u_{i}:  S \rightarrow \mathbb{R}$ denotes the utility function to determine the payoff of states in $S$ for that agent.
        A \textit{policy} is the task of deciding which action to choose in any state. The policy for the $i$-th agent is denoted by $\pi_{i}: S\rightarrow|A_{i}|$, a distribution over the action space depending on the state the agent is in. The collection of policies of all the agents is called the joint policy $\pi = \langle \pi_{1},...,\pi_{n} \rangle$. If the joint policy is fixed, then all the agents take actions according to that, and the expected reward or the  \textit{state value} $V_{i}^{\pi}(s)$ of an state $s$ for agent $i$ is:
        \begin{equation}
        \label{state_value}
            V_{i}^{\pi}(s) = u_{i}(s) + \sum_{a \in A} \pi(a|s) V_{i}^{\pi}(\tau(s,a))
        \end{equation}
        And the expected reward of an action or \textit{action value} $Q_{i}^{\pi}(s,a)$ of state $s$ for agent $i$ in joint action $a$ is:
        \begin{equation}
            Q_{i}^{\pi}(s,a) = u_{i}(s) + V_{i}^{\pi}(\tau(s,a))
        \end{equation}
    \paragraph{Best Response Policy} \hspace{-1mm} This $state \ value$ assumes that the policies of other agents are stationary, although in multi-agent policy learning contexts, this is rarely the case. Even if agent $i$ does not change its policy, the expected reward will change when other agents change their policies. \par
    In a multi-agent setting, the agents have to search the same state space to create strategies that maximise their payoff. Here, the goal of the agents is to adapt to the non-stationary policies of other agents. A $best \ response$ policy for agent $i$ is defined by the set $BR(\pi_{-i})$. It is the set of policies that maximise the reward assuming that the joint policy of other agents $\pi_{-i}$ is stationary.A policy $\pi^{*}$ belongs to  $BR(\pi_{-i})$ if and only if the following inequality holds:
    \begin{equation}
        \forall \pi_{i};\forall s\in S;\pi_{i}^{*} \in BR_{i}(\pi_{-i})  \quad V_{i}^{(\pi_{i}^{*}, \pi_{-i})}(s) \geq V_{i}^{(\pi_{i}, \pi_{-i})}(s)
    \end{equation}
    
    \subsection*{Online Planning}
    Simulation based methods provide a means of policy learning and planning in non-stationary environments where it is possible for the agents to find their best response policy according to their current model of other agents. Monte Carlo Tree Search (MCTS) \cite{kocsis2006bandit} is an effective tree searching algorithm that uses simulation data to balance between exploration of new solutions and exploitation of solutions with potential. There are four major steps in the traditional MCTS simulation process. At any given moment, with an already expanded simulation tree, the first step is to start from the root and select a new leaf by going down the tree. While traversing, at every node, scores for its child nodes are calculated and the child with the best score is selected to go further down until a leaf is reached. The score is calculated with,
    \begin{equation}
        score_{a} = Q(s,a) + c*\sqrt{(\log N_{i}) / n_{i}}
        \label{eq:PUCT}
    \end{equation}
    where $Q(s,a)$ is the action value, $n_{i}$ is the visit count of action $a$ and $N_{i}$ is the total visit count of state $s$. The constant $c$ determines the tradeoff between exploration and exploitation. The next step is to expand the selected leaf and add its child nodes to the tree, called the expansion step. After selection and expansion, the state value of the child nodes are estimated, traditionally with random rollouts. The last step is to carry the newfound information about the expanded node to its ancestors through backpropagation. In an simulated tree, every node in that tree holds information about its expected reward, the number of times that node was visited and the edges to its child nodes.\par
    AlphaZero \cite{silver2017mastering} takes a different approach with MCTS. In recent studies, it has been shown that generating policies for large state and action spaces can be done with deep neural networks rather than random plays \cite{mnih2015human}. Hence, instead of estimating state values with computationally exhausting rollouts, a neural network is used in AlphaZero. The neural network also provides an initial policy for the action space to better guide the tree searching process. This not only makes the process faster, but also helps focus on better solutions.
    
    \subsection*{Coordination Graph and Graph Neural Networks}
    \begin{figure}
        \centering
        \includegraphics[width=0.4\textwidth]{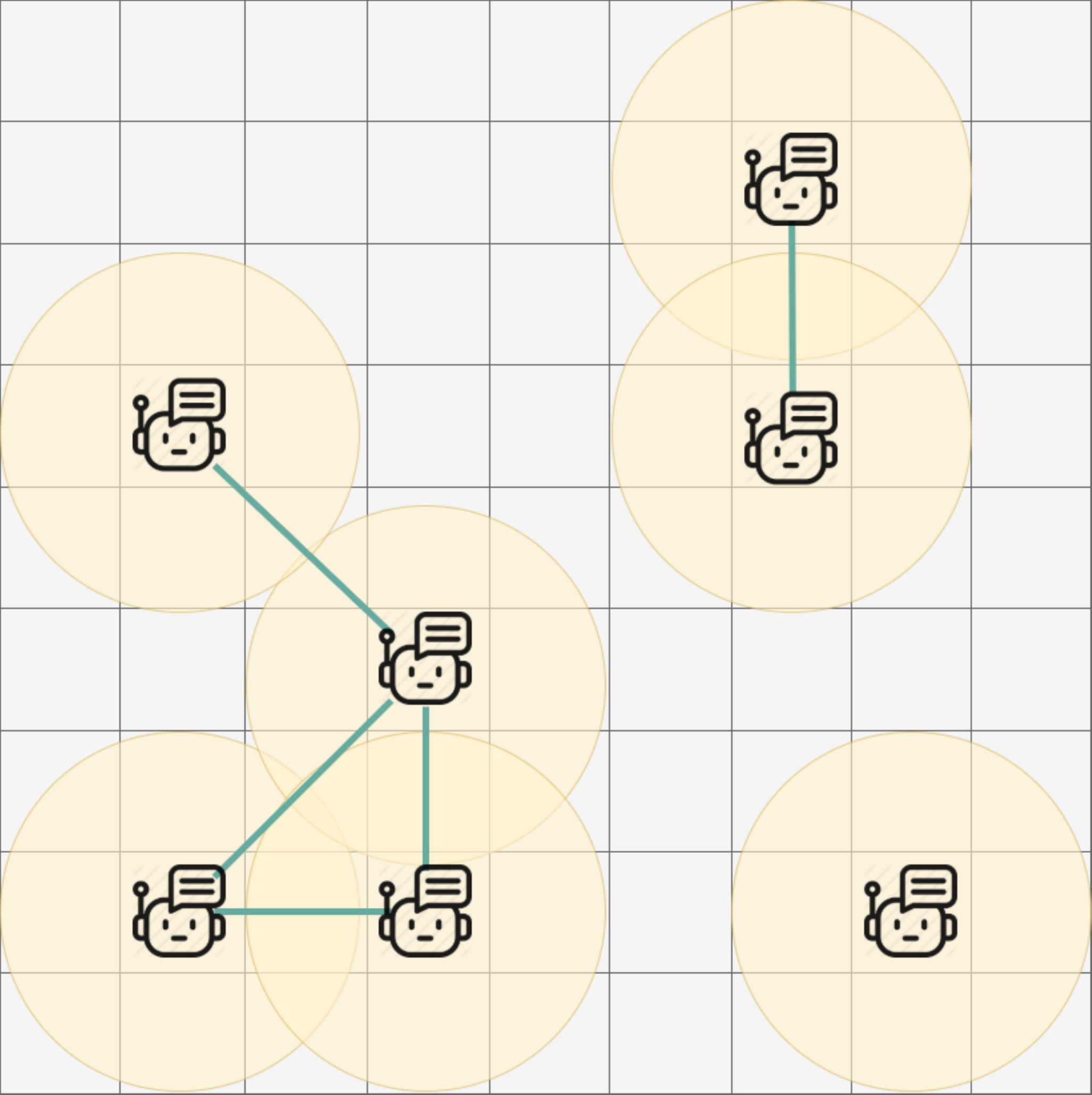}
        \caption{Coordination Graph. The agents have fixed ranges and they are connected if their communication ranges intersect. The edges determine which agents share information.}
        \label{fig:cg}
    \end{figure}
    
    In multi-agent environments, every agent interacts with a subset of other agents. These interactions can be modeled using a graph structure called Coordination Graph (CG), where the nodes are the agents and interacting agents have edges between them. This structure captures the locality of the dependencies of the agents regarding payoffs. This locality of dependency also determines the cooperative policies of the agents. In Figure \ref{fig:cg}, the agents create a forest of connected graphs based on their range. The edge creation criteria can be predefined and the generated CG will denote which agents need to communicate.\par
    
    New models and architectures have been brought to light that use graphs in training models to complete tasks regarding graphical structure, called Graph Neural Networks (GNN). The use of GNNs has been shown to be effective in agent modeling \cite{bohmer2020deep}\cite{liu2020multi}. Graph Convolutional Networks (GCN) is a popular GNN for graph embedding. GCN uses a message passing mechanism that lets the nodes of a graph to aggregate and retain information about their neighbors of different distances. This greatly helps in representation learning of graphical structures and use it to generate practical solutions.


\section{Multi-Agent Online Planning}


    \begin{algorithm}
    \caption{SiCLOP}
    \label{alg:pipeline}
    \KwIn{Set of initial states $S$, set of agents $\mathcal{N}$, limited steps $t$, data storage $d$, transition function $\tau$, utilility dunction $\mathcal{U}$}
    \SetKwBlock{Begin}{function}{end function}
    \ForAll{$state \in S$}
    {
        \For{$step$ in $0:t$}
        {
            $a \gets$ SiCLOP-S ($state,\mathcal{N}$)\\
            $state \gets \tau(state,a)$ \\
            \If{$state$ is terminal}
            {
                continue
            }
        }
        $reward_{i} \gets \mathcal{U}(state,a)$ \\
        \For{$step$ in $0:t$}
        {
            store $(state_{step},reward)$ in data storage $d$
        }
        update \texttt{SiCLOP-P\textsubscript{NN}} neural network with Algorithm \ref{alg:train}
    }
    \Begin($\texttt{SiCLOP-S} {(}s,\mathcal{N}{)}$)
    {
        initialize $root$ with state $s$ \\
        \While{time limit not exceeded}
        {
            $\Delta \gets$ select leaf using Eq. \ref{eq:PUCT} \\
            \If{$\Delta$ is terminal}
            {
                break
            }
            $\Lambda \gets \texttt{SiCLOP-P}(\Delta_{s},\mathcal{N},k)$ \\
            \For{$a$ in $\Lambda$}
            {
                create child node of $\Delta$ with state $\tau(\Delta_{s},a)$ \\
                $\rho \gets$ state value predicted by SiCLOP-P\textsubscript{NN} \\
                $reward \gets \mathcal{U}(\Delta_{s},a) + \rho$ \\
                propagate $reward$ to ancestors up to $root$
            }
        }
        $a \gets$ most visited joint action of $\mathcal{N}^\prime$ in $root$ \\
        \textbf{return} $a$
    }
    \Begin($\texttt{SiCLOP-P} {(}s,\mathcal{N},k{)}$)
    {
        initialize set of joint actions $\Lambda = \{\}$ \\
        $\pi\gets$ SiCLOP-P\textsubscript{NN}($preprocess(s)$) \\
        $a \gets$ sample joint action from $\pi$ \\
        \While{$|\Lambda|<k$}
        {
            \For{$i$ in $1:|\mathcal{N}|$}
            {
                $s^\prime \gets \tau(s, a_{-i})$ \\
                $\pi_{i} \gets$ SiCLOP-NN($preprocess(s^\prime)$)[i] \\
                $a_{i} \gets$ sample joint action from $\pi_{i}$ \\
            }
            $\Lambda \gets \Lambda \cup a$ \\
        }
        \textbf{return} $\Lambda$
    }
    \Begin($\texttt{preprocess} {(}s{)}$)
    {
        initialize $s^{*}$, empty set of observation states \\
        \For{$i$ in $1:|\mathcal{N}|$}
        {
            $o_{i} \gets$ observed elements of agent $i$ within fixed range concatenated with environmental details\\
            $s^{*} \gets s^{*} \cup o_{i}$
        }
        $CG \gets$ coordination graph based on fixed rules \\
        \textbf{return} $s^{*}, CG$
    }
    \end{algorithm}
    
    Our proposed algorithm  \textit{\textbf{Si}mulation based \textbf{C}ooperation \textbf{L}earner and \textbf{O}nline \textbf{P}lanner} (SiCLOP) is an MCTS based MMDP solver. SiCLOP has three major components, simulator SiCLOP-S, pruner SiCLOP-P, and predictor SiCLOP-NN. SiCLOP-S modifies the traditional MCTS to accommodate the pruning capabilities of SiCLOP-P and generate a deep simulation tree to find optimal policies. SiCLOP-P is a novel pruning technique that iterates over the joint policy space of the agents and uses best response policy predictor SiCLOP-NN to sample a small number of potential joint actions from the exponential joint action space. Lastly, SiCLOP-NN is a GCN based neural network that takes the local information and interaction of the agents (represented by CG) as input to predict best response policies for the agents and estimate the state value. Combining these three components, SiCLOP becomes a scalable and adaptive MMDP solver that is also capable of transfer learning. \par
    The MMDP problems are represented as initial states. The algorithms takes a set of initial states $S$, set of agents $\mathcal{N}$ and step limit $t$. SiCLOP is run for every initial state $s$ in $S$ in lines 1-10. For $t$ steps, the simulator SiCLOP-S finds an optimum joint action in line 3, and the state transitions to the next step in line 4, and repeats unless terminated in line 5. After $t$ steps, the simulation for that initial state $s$ is finished and the rewards are calculated in line 7. Then in lines 8-9, the \textit{state-action} and \textit{state-reward} pairs are stored in a database $d$ and then the database is used to train SiCLOP-NN in line 10. To summarize this section, the goal is to use simulation (depends on the predictor) to find solutions and then use data from that simulation  to improve the predictor so that next time, SiCLOP can find even better solutions.
    
    
    
    \subsection{SiCLOP-S, Simulation and Action Selection}
    
        The simulator starts with parameters initial state $s$ and set of all the agents $\mathcal{N}$. The $root$ of the simulation tree is initialized by the initial state $s$ in line 12. \par
        There are four steps in the simulation process and they are repeated multiple times within a fixed amount of time in the \textbf{while} loop in lines 13-22. In the first step, a leaf is reached from the $root$ in line 14 by frequently calculating the scores of the children of a node with Eq. \ref{eq:PUCT} and choosing the child with the highest score. Visit counts and state values of visited nodes, while traversing down the tree, are updated. After choosing a leaf node, its child nodes are created and the leaf is labeled as expanded. The number of possible child nodes is exponential in magnitude with respect to the number of agents. So, in the second step, a small subset of joint actions are sampled from the joint action space with SiCLOP-P in line 17 and the child nodes are created from the set of sampled actions to complete step three and four in lines 18-22. After creating a child node in line 18, its state value is determined in lines 20-21 by calculating penalties and prediction made by SiCLOP-NN. Then the state value is backpropagated to its ancestors in line 22. \par
        At the end of generating the simulation tree, the most visited joint action which denotes the joint action with the most potential, is selected in line 23 and then returned.
    
    \subsection{SiCLOP-P, Pruning the Joint Action Space}
        The SiCLOP-P component of the algorithm described in lines 25-35, focuses on pruning the joint action space for SiCLOP-S. We propose a novel joint action sampling method inspired by \textit{Gibbs sampling} to effectively search over the joint policy space. First, we elaborate the sampling process. The predictor SiCLOP-NN generates the best response policy for the agents given a state. Using this predictor, the best response policy of every agent is sampled considering the agent policies as static periodically. The sampling runs for multiple cycle. At cycle $k$, the policy of agent $i$ is updated with the conditional best response policy
        \begin{equation}
            \pi^{k}_{i} = \Psi(\pi_{i}|\pi^{k}_{1},...,\pi^{k}_{i-1},\pi^{k-1}_{i+1},...,\pi^{k-1}_{n})
        \end{equation}
        In SiCLOP-P, the initial policy is represented by initial joint action selection in line 28. Then, in lines 29-34, a fixed number of joint actions are sampled. The agents are selected periodically in lines 30-32 where the current policy of agent $i$ is replaced by the best response policy. To update the policy, all other agents' policies are considered to be static. The current adopted policies of the agents are represented by $s^\prime$ in line 31. Then the best response policy is updated in line 32 and new action is sampled from that policy in line 33. At the end of the cycles, in line 34, the updated joint action is added to the sampled joint actions and after a fixed number of cycles, the set of sampled joint actions are returned. \par
        
        The policy generator SiCLOP-P\textsubscript{NN} predicts the best response policy for an agent. So, the generated policy for agent $i$ belongs to $BR_{i}(\pi_{-i})$. In Lemma~\ref{lemma:policy}, we show that SiCLOP-P eventually converges to an equilibrium joint policy for a group of agents.
        
        \begin{lemma}
        \label{lemma:policy}
        In any iteration of sampling for agent $i$ in group $\mathcal{N}$, the new sampled policy $\pi^{*}_{i} \in BR_{i}(\pi_{-i})$ will lead to non-decreasing reward and iterating through all the agents of $\mathcal{N}^\prime$ will lead to an equilibrium for that group.
        \end{lemma}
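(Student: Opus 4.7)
The plan is to prove the lemma in two stages: first establishing that each single-agent best-response update is value-non-decreasing, and then deriving convergence of the full round-robin procedure to an equilibrium.

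For the first stage I would appeal directly to the definition of $BR_i(\pi_{-i})$ in equation~(3). Since the newly sampled policy $\pi_i^{*}$ lies in $BR_i(\pi_{-i})$, the inequality $V_i^{(\pi_i^{*},\pi_{-i})}(s) \geq V_i^{(\pi_i,\pi_{-i})}(s)$ holds for every state $s \in S$. The previous policy $\pi_i$ is itself a feasible candidate for the maximization that defines $BR_i$, so the ``non-decreasing'' clause is immediate, and by applying the Bellman-style recursion in equation~(1) one step at a time, the inequality propagates to the whole horizon.

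For the second stage I would view the iteration in lines 30--34 of SiCLOP-P as producing a sequence of joint policies $\pi^{(0)},\pi^{(1)},\pi^{(2)},\ldots$, where each transition updates a single agent's policy to a best response against the current $\pi_{-i}$. Under the cooperative MMDP reading in which the individual utilities $u_i$ align into a shared team objective, the first stage implies the joint value is monotone non-decreasing along this sequence. Because the reward is bounded, monotone convergence gives a limit value. At a fixed point, by construction every coordinate $\pi_i$ already satisfies $\pi_i \in BR_i(\pi_{-i})$, so no agent can strictly improve by a unilateral deviation --- which is exactly the definition of an equilibrium joint policy for the group $\mathcal{N}'$.

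The main obstacle I expect lies in the gap between \emph{value} convergence and \emph{policy} convergence: the joint value sequence is monotone and bounded, hence convergent, but the policies themselves could in principle oscillate among equal-value best responses. I would close this gap either by imposing a deterministic tie-breaking rule inside $BR_i(\cdot)$, so that over the finite joint policy space the iteration terminates in finitely many rounds, or by phrasing the ``equilibrium'' conclusion in the weaker ``no profitable unilateral deviation'' sense that follows immediately from value convergence. A secondary subtlety worth flagging explicitly at the start of the proof is the passage from the per-agent inequality in equation~(3) to a joint-value inequality; this is automatic under the shared-objective cooperative interpretation, but must be stated as an assumption since the MMDP tuple in Section~2 admits a distinct $u_i$ per agent.
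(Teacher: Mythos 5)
Your proposal follows essentially the same route as the paper's proof: the per-agent inequality obtained directly from the definition of $BR_{i}(\pi_{-i})$, monotonicity of the combined (group) value under single-agent best-response updates, and finiteness/boundedness of the policy space and rewards to conclude convergence to a joint policy admitting no profitable unilateral deviation, i.e.\ an equilibrium for $\mathcal{N}^\prime$. If anything you are more careful than the paper, which silently performs the per-agent-to-combined-reward aggregation and does not address possible cycling among equal-value best responses, both of which you flag and handle explicitly.
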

        \begin{proof}
        If the current joint policy of all the agents is $\pi$ and the updated policy for agent $i$ is $\pi_{i}^{*}$, then for all $s \in S:$
        \begin{align}
            \forall i \in \mathcal{N}^\prime \quad V_{i}^{(\pi_{i}^{*}, \pi_{-i})}(s) \geq V_{i}^{(\pi_{i}, \pi_{-i})}(s)
        \end{align}
        If all the agents in the group acquire non-decreasing rewards, the combined reward will also be non-decreasing. As both the joint action space and reward are finite, iterating through the policy space will converge to a joint policy where no individual policy can be updated to get a better result, hence converging to an equilibrium for the group $\mathcal{N}^\prime$.
        \end{proof}
        This shows that with a well trained policy prediction function, the joint policy of every group will converge to a best response policy equilibrium. But in non-stationary environments, there might not a single joint policy equilibrium for the agents to converge to. In that case, the agents continuously create new best response policies to maximise their respective payoffs.\par
        
        The characteristics of the sampled joint actions change gradually throughout the learning process. In the early stages of learning, the policies of the agents are initialized to be random or according to prior knowledge. The set sampled joint actions  represent that initial joint policy. If it is random, then the agents explore different strategies through simulation as a group and develop their policies by finding strategies that lead to better payoffs. At the same time, agents adapt to other agents' policies and search for best responses. And if it is initialized with handcrafted policy, then the agents evolve by finding counter strategies and updating their existing policies. In both cases, they initially search randomly for new strategies and select the optimal ones in simulation. After repeating the learning process, the agents start to generate more definitive policies, meaning some actions are more preferred than others in their respective policies. This causes the sampling method to sample joint actions that have higher probability of being chosen to maximize rewards for all the agents. Directly predicting the optimal policy can concentrate on single policies and might lose robustness. Rather, simulating a search tree tests different ways of solving a MMDP with a set of best response policies to choose from. This reduces the dependency on single strategies and provides stable robust solutions. \par
         
     \begin{figure}[t]
        \centering
        \includegraphics[width=0.48\textwidth, height=5 cm]{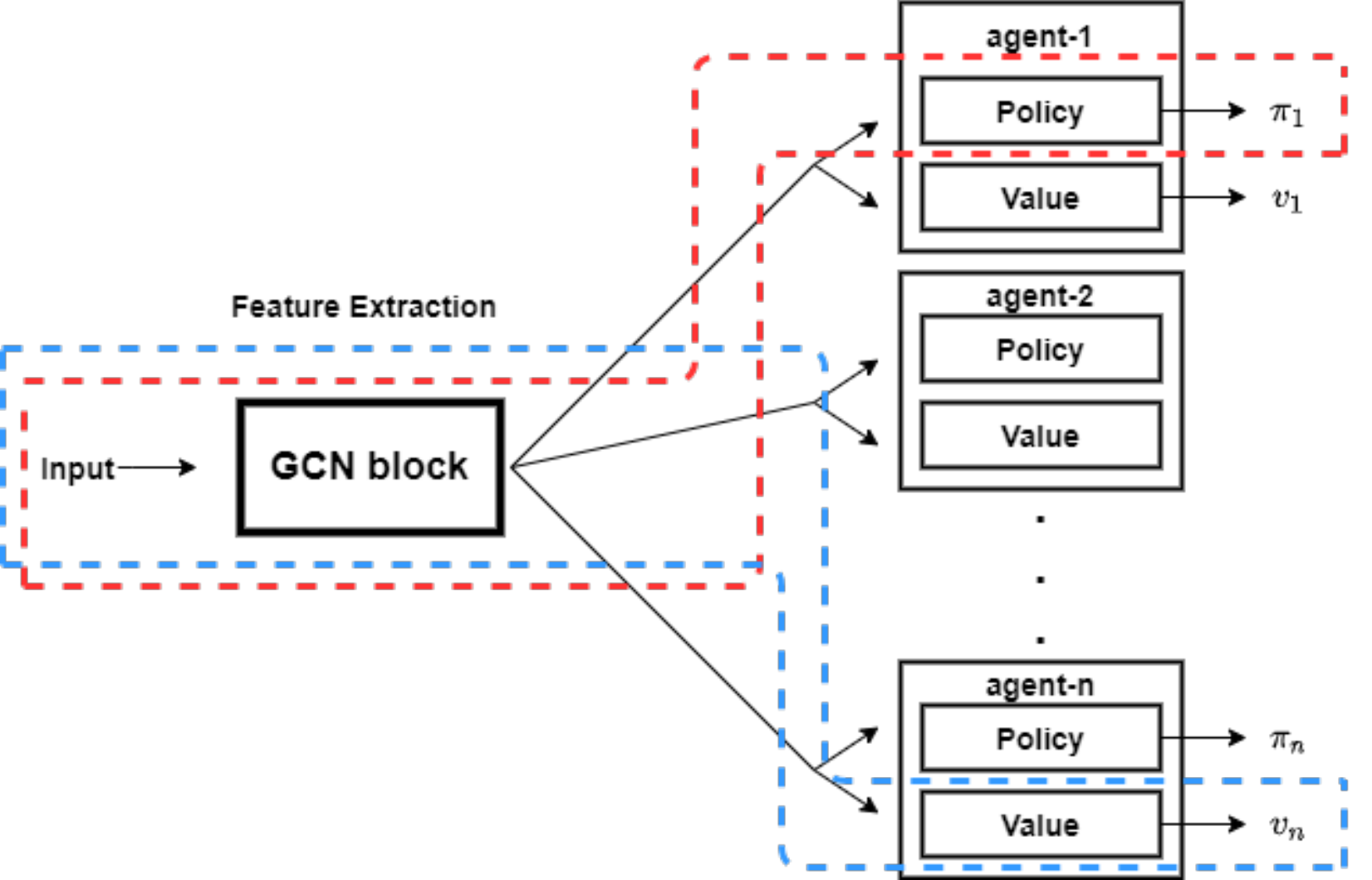}
        \caption{Neural network architecture of SiCLOP-NN}
        \label{fig:nn}
    \end{figure}

    \begin{algorithm}
    \caption{Training algorithm for SiCLOP-NN}
    \label{alg:train}
    \KwIn{set of agents $\mathcal{N}$, prediction neural network SiCLOP-P\textsubscript{NN} and data storage $d$}
    \KwOut{Updated prediction neural network SiCLOP-NN}
    \For{$epoch$ in $1:nEpochs$}
    {
        \For{$i$ in $1:|\mathcal{N}|$}
        {
            prepare batch $B_{i}$ of state-action and state-reward pairs from $d_{i}$ of recent episodes by random selection \\
            $loss \gets$ \textit{policy\_error + value\_error} (Eq. \ref{eq:policyloss} and \ref{eq:vloss})\\
            minimize $loss$
        }
    }
    \end{algorithm}
    
    \subsection{SiCLOP-NN, Prediction with Neural Network}
    
        The neural network SiCLOP-NN is a GCN based policy predictor and state value estimator. It has two parts, feature extraction and prediction, as shown in Figure \ref{fig:nn}. The first part is used for graph embedding as our algorithm focuses on CGs. This feature extraction part consists of multiple GCN layers. The number of GCN layers determine the number of hops of feature information among agents. Two GCN layers would mean that the features of the agents will reach other agents up to two hops away. The second part takes the extracted feature vectors of the agents from the GCN block and generates predicted best response policies and state value estimations for every agent. The state values are aggregated to construct a combined state value. In Figure \ref{fig:nn}, The red outlined pipeline is policy predictor and the blue outlined pipeline is state value estimator. SiCLOP-NN returns a boltzman probability distribution over the actions to enable weighted sampling of actions in SiCLOP-NN.\par
        
        SiCLOP-NN takes the local information of the agents and a CG as input. The setting of the local information can be defined, for example, the cells within a limited range of the agent. The CG is also constructed according to predefined rules. In SiCLOP, the input is created in the \textit{preprocess} function where the local information of the agents are collected into a set in lines 39-40. Then the set is returned with the constructed CG. \par
        
        The training of the neural network occurs after a fixed number of simulation episodes. In Algorithm \ref{alg:train}, a fixed-sized batch $B_{i}$ of \textit{state-policy} and \textit{state-value} pairs are randomly selected from the data from recent episodes for every agent. The reason behind selecting data from recent episodes is to train the neural network on recently found policies of the agents. This allows the agents to adapt to new policies faster. Then predictions are made on that batch of states to calculate the cross-entropy error between the real policy and predicted policy for agent $i$. This is calculated using
        \begin{equation}
            policy\_error = \sum_{(s^{b}, \pi^{b}_{i}) \in B_{i}} -\pi^{b} \cdot ln(SiCLOP-NN(s^{b},i))
            \label{eq:policyloss}
        \end{equation}
        And the mean squared error between real value and predicted value is calculated with
        \begin{equation}
            value\_error = \sum_{(s^{b}, v^{b}_{i}) \in B_{i}} ||v^{b}_{i} - SiCLOP-NN(s^{b},i)||^{2}
            \label{eq:vloss}
        \end{equation} 
        The neural network is updated to minimize the combined loss.\par
        
        The neural network trains and predicts on states consisting of any number of agents, with their internal interactions passed as CG into the GCN block. This adds the flexibility of performing predictions on variable number of agents. On the other hand, the input is a collection of observations of the agents dictated by a constraint of locality, i.e. communication range. SiCLOP-NN's architecture allows making prediction in environments of any size and shape. If the agents are trained in a smaller environment with fewer agents, the learned agents can still apply the learned policies on much larger environments with more agents as their policy prediction only depends on the locality of information.
        
\section{Empirical Results}

\begin{figure}
    \centering
    \includegraphics[width=0.45\textwidth]{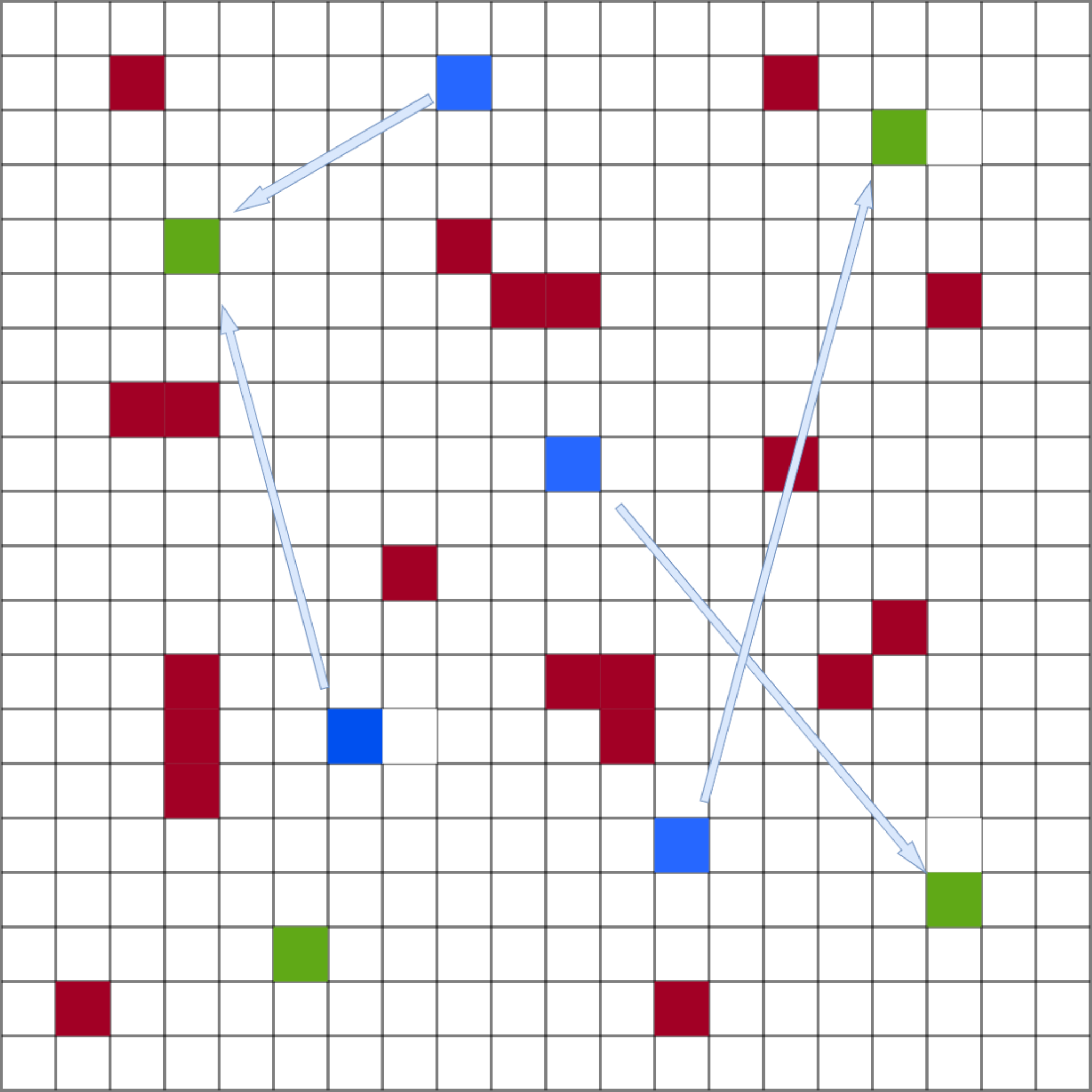}
    \caption{In this environment, \textit{green} cells are destinations, \textit{blue} cells are agents and \textit{red} cells are obstacles. The arrows from \textit{blue} cells to \textit{green} denote the destination of the agents.}
    \label{fig:example}
\end{figure}

To test and compare our algorithm, we used a modified version of the drone delivery environment \cite{choudhury2021scalable}. The modification is the inclusion of obstacles to make the environment more realistic, making the movement of the agents difficult. The environment is shown in Figure \ref{fig:example}. Here, agents need to reach their designated objectives but agents can collide with the obstacles or other agents if they try to enter the same cell (can be objective cell) at the same time. They can also choose actions that take them out of the environment area. These actions are heavily penalized. The goal of the environment design is to test if the agents can learn to avoid penalizing actions by using cooperation and complete the tasks, as seen in most realistic problems. Every agent gets $1.0$ point for reaching an objective, $-1.0$ for collision and $-0.5$ for going out of bounds. There are also penalties if multiple agents stick too close to each other. Agents are rewarded if they get closer to their objectives. The agents have a total of $9$ possible actions, going to the $8$ surrounding cells and staying in place. The MMDP episode ends when all agents reach their goals or the pre-defined number of steps are used up. The results are shown as average score per agent as the number of agents vary in multiple environmental setting. \par
Next, we will present an analysis of how the algorithm performs, how it can scale up and how it performs compared to FV-MCTS-MP and ABC planning and learning algorithms. 

    \subsection{Agent Training and Transfer Learning}
    The first experiment is to observe how the agents improve their policies to get better results. In this experiment, there are $8$ agents in a $15x15$ grid with $15$ obstacles. The agents have maximum 40 steps to complete the tasks. A total of 500 randomly generated problem states were generated and SiCLOP was tested on these 500 solution episodes. Agent policy predictor was trained every 10 episodes. For every step in the episode, the simulator can either run for a maximum amount of time or can expand limited number of simulation nodes. The algorithm was tested on three such settings. To test the scaling up ability of the algorithm, the trained policy predictors were used to test if it can generate good quality solutions on new environments of different sizes and difficulties.\par
    
    \begin{figure}
     \centering
     \includegraphics[width=0.45\textwidth]{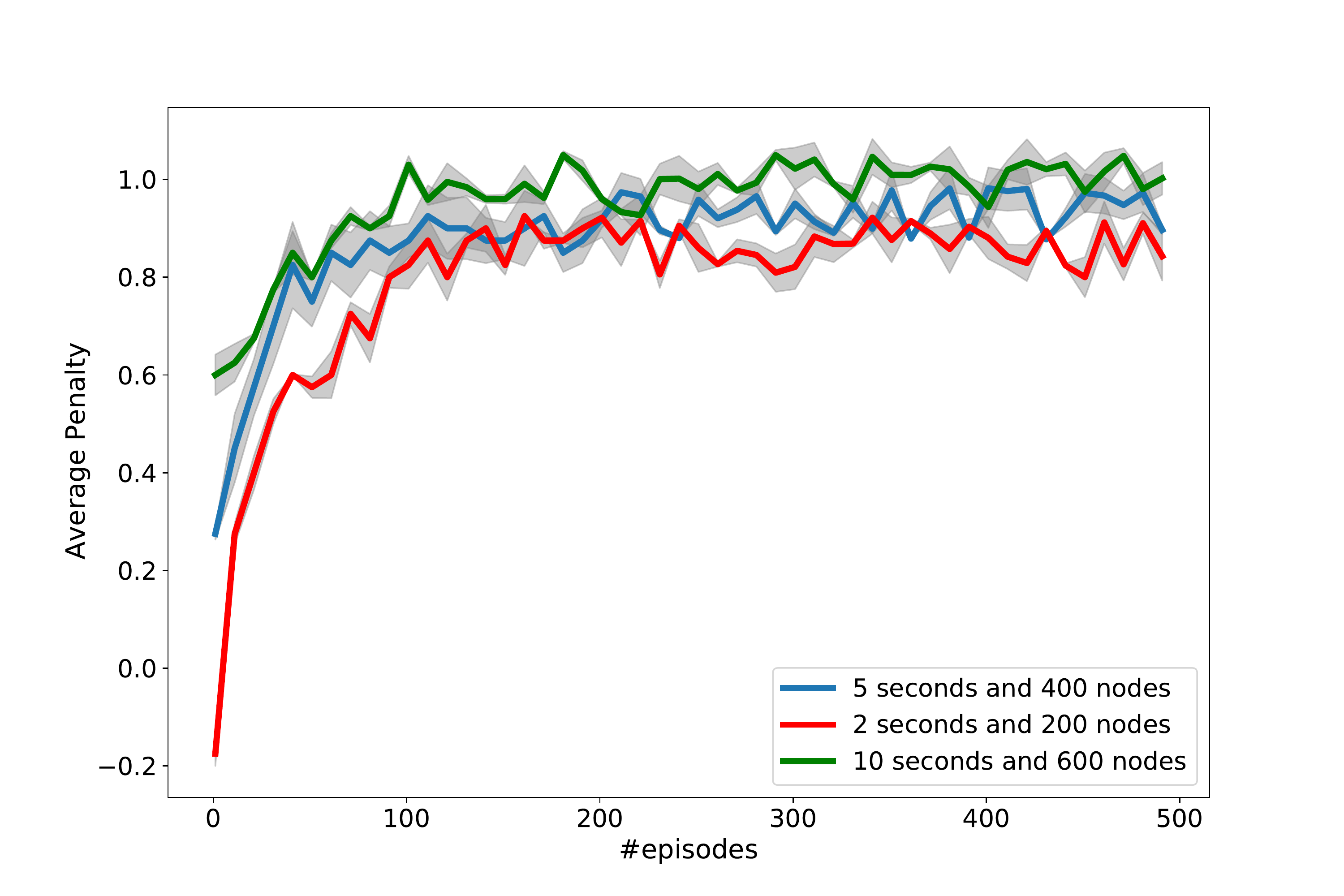}
     \caption{Evolution of the agents in different settings}
     \label{fig:ex1}
    \end{figure}
    
    \begin{table}[t!]
    \centering
    \begin{tabular}{|p{1.6cm}||p{1.6cm}|p{1.6cm}|p{1.6cm}|}
     \hline
     \multicolumn{4}{|c|}{Environment List} \\
     \hline
     Grid Size & Agents & Obstacles & Average Score\\
     \hline
     8x8   & 5 & 4 & 1.004\\
     20x20 & 20 & 10  & 1.06\\
     50x50 & 100 & 150 &  1.35\\
     100x100 & 150 & 200 &  1.52\\
     \hline
    \end{tabular}
    \caption{Transferring and testing the learned agents in new environments}
    \label{tab:transfer}
    \end{table}
    
     From the results shown in Figure \ref{fig:ex1}, we can see that at the beginning of the experiment, the agents adopt random policies and as the episodes progress they start to adapt to the rules of the environment and converge to a high score around $1.0$. An average score around $1.0$ denotes that the agents are avoiding penalties and succeeding at their given tasks. If the simulation limitations are tighter, the algorithm struggles to find good solutions at the beginning, even yielding to negative scores. But as SiCLOP is allowed to do more simulation before taking actions, it generates better solutions from the start, training the agents on quality solutions. This helps converging to optimal scores much faster, as we can see in Figure \ref{fig:ex1}. It is also shown that, in general, more simulation leads to better score on average. \par
     
     After training the agents in a $15x15$ grid, the policy predictor is transferred to randomly generated environments. These environments differ in size, number of agents and obstacles. In Table \ref{tab:transfer}, we can see that the average score per agent in these environments are above $1.0$, which implies that the algorithm is capable of adapting to a varying environment  and avoid penalties by using knowledge from a different environment. But in case of other planning algorithms, FV-MCTS-MP stores state values of visited states so it cannot use any prior knowledge on new environments and ABC uses the entire state to predict policies, so it can not be applied to an environment of different size. Transfer learning lets the algorithm to train on different environments, enabling agents to initially train in a small environment and then train in larger environments. As seen from the experiment, the prediction model can transfer the knowledge allowing the training in large environments to converge faster. This can save time and computational resources in case of complex environments and large single agent action spaces, making the algorithm applicable to most realistic MMDP problems.
     
     \begin{figure}[t]
     \centering
     \begin{subfigure}[b]{0.5\textwidth}
         \centering
         \includegraphics[width=\textwidth]{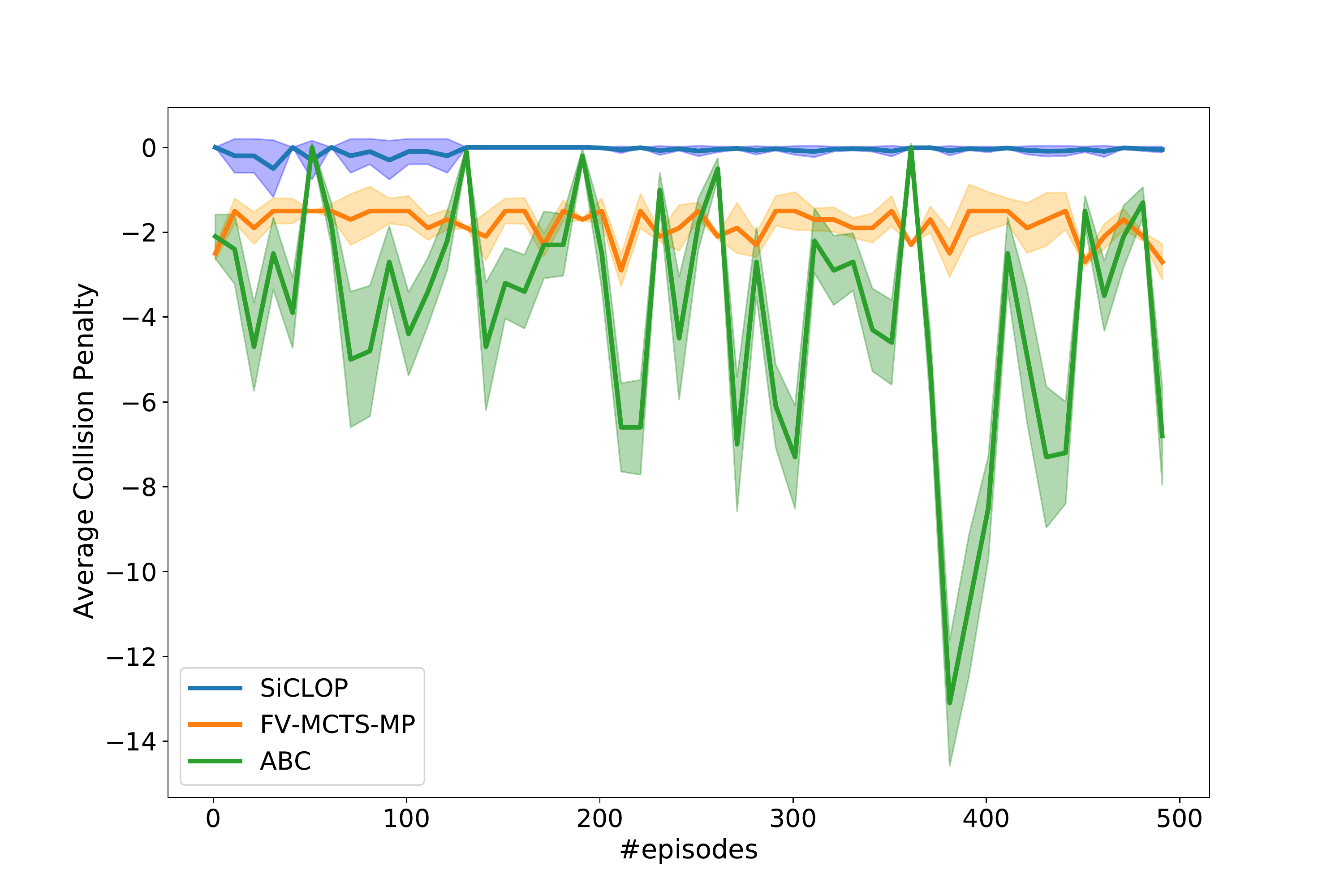}
     \end{subfigure}
     \hfill
     \begin{subfigure}[b]{0.5\textwidth}
         \centering
         \includegraphics[width=\textwidth]{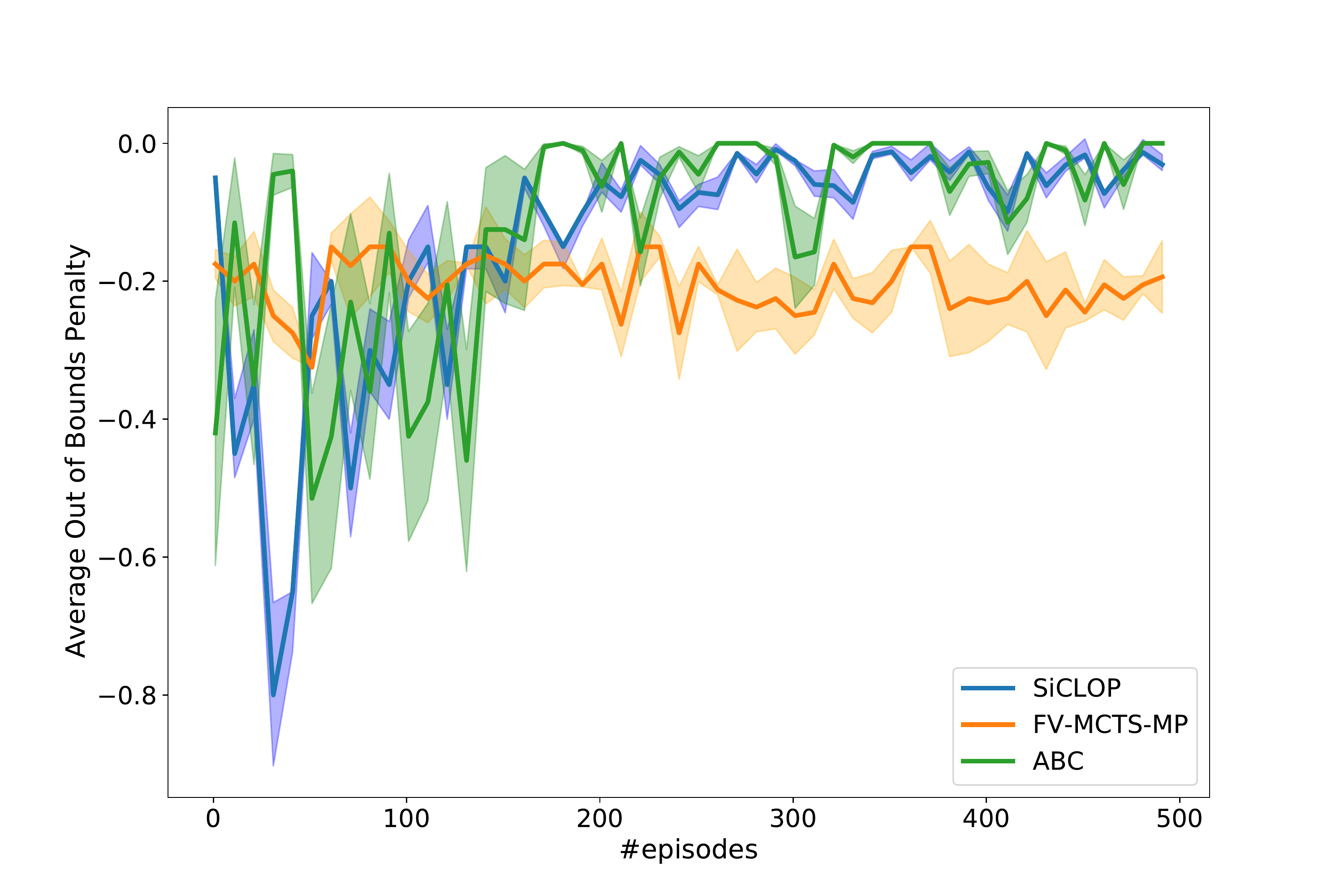}
     \end{subfigure}
     \caption{Comparison of adaptability to collision (top) and out of bounds (bottom)}
     \label{fig:ex2}
    \end{figure}
    
    \subsection{Learning Cooperation}
    In the second experiment, the cooperation learning and adaptability of the algorithms were tested. SiCLOP and ABC are both model learning algorithms and FV-MCTS-MP is a centralized tree searching algorithm. While ABC adopts a decentralized approach, SiCLOP chooses and learns its policies in a centralized manner. To test the cooperation learning ability of the agents, both algorithms were run on the same 500 episodes with $5$ seconds time limit per action in a $15x15$ grid with $5$ obstacles and $8$ agents. In Figure \ref{fig:ex2}, we can see how the algorithms learn cooperation and adapt to handle $collision$ and $out of bounds$ penalties.\par
    From Figure \ref{fig:ex2}, it is clear that ABC fails to learn how to avoid collisions but can learn to avoid going out of bounds. The penalty of going out of bounds depends on individual agents and collision mostly depends on the interaction of the agents. ABC learns the agent models one at a time and tries every available action. This helps the agent models to learn individual actions effectively. But in case of collision, it is very hard to avoid without communication among agents, causing the ABC agent models to not converge to effective policies. As there is no communication, there is no differentiation between successful and unsuccessful joint actions before the actions are executed.
    FV-MCTS-MP consistently outperforms ABC in cooperation but due to time limit and small number of iterations, there were no significant increase in performance. SiCLOP outperforms both algorithms as it searches over the joint policy space and needs lesser number of iterations to find cooperative joint policies to avoid the penalties. Throughout the training process, SiCLOP shows a stable performance and converges to almost no average penalty. \par
    
    Contrary to the collision penalty, all three algorithms manage to adapt to the out of bounds penalty. Before training, FV-MCTS-MP outperforms both SiCLOP and ABC but after training these two perform better. Similar to previous experiment, FV-MCTS-MP's performance does not get better because of the limitations. The reason behind the similar performances of SiCLOP and ABC is that avoiding out of bounds penalty does not require any cooperation.
    
    \subsection{Performance Comparison}
    In this experiment, SiCLOP was compared to FV-MCTS-MP and ABC by average scores accumulated on gradually increasing size of environments and number of agents. All of the algorithms had $10$ seconds time limit per action. SiCLOP and ABC algorithms were trained on 500 episodes. In Figure \ref{fig:ex3}, we can see the comparison in scores.\par
    \begin{figure}
        \centering
        \includegraphics[width=0.5\textwidth]{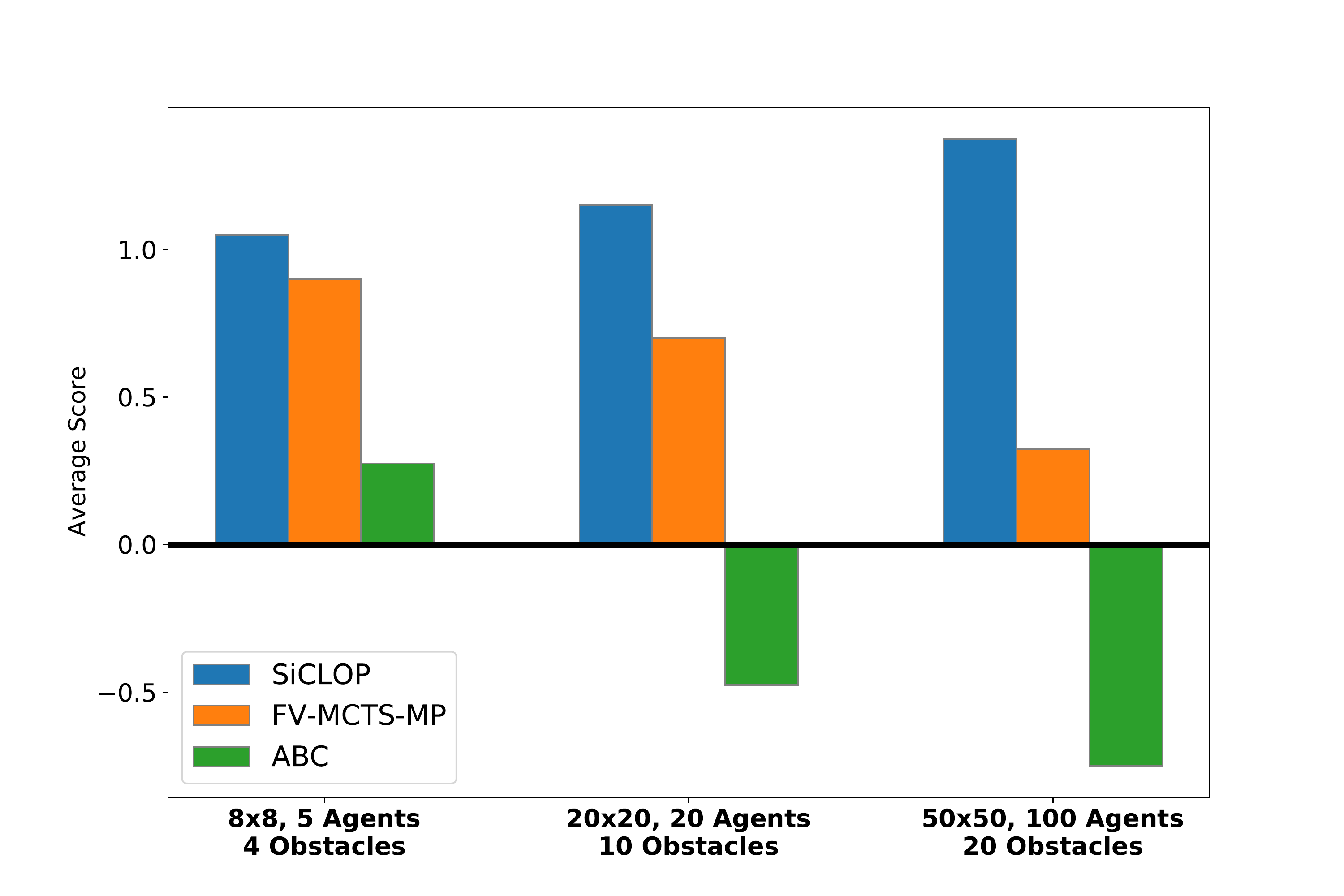}
        \caption{Performance comparison}
        \label{fig:ex3}
    \end{figure}
    The algorithms were tested in three different environments of varying sizes. As the size of the environment gets larger, accumulating a high score becomes harder as there are more obstacles and the objectives are farther. From this experiment, it is clear that as the environment gets larger, ABC is more prone to penalties and accumulates lower average scores. FV-MCTS-MP, with limited time and number of iterations, is unable to find sequence of actions that lead to high score. As the simulation tree of FV-MCTS-MP is wide, it calculates the expected action value over a wide and shallow tree, causing it to have average negative values. So, FV-MCTS-MP prefers not to move the agents to minimize penalties. But, as SiCLOP searches over a deeper search tree, it can find better solutions to the MMDP problems. Also, the size of the environment hardly affects the performance of SiCLOP as the agents only use their local information and interactions. These attributes and better cooperation learning allow SiCLOP to outperform both ABC and FV-MCTS-MP by a significant margin. 

\section{Conclusions and Future Work}
In this paper, we introduced a scalable and adaptive algorithm to solve MMDP. Our algorithm SiCLOP uses local information of the agents to construct state dependant dynamic CGs and uses it to find optimum policies through simulation and effective pruning. We have shown that our algorithm can adapt to larger, more realistic environments and outperform existing online MMDP solvers.\par
SiCLOP is an algorithm that uses policy predictions as a tool to find better policies, which implies that the quality of the initial prediction holds great significance on the final outcome. More work is required for finding better ways to train the prediction models, which may include newer ways of selecting self-play data on which the model is trained on and better neural network architectures that suit the process better.





\bibliographystyle{ACM-Reference-Format} 
\balance
\bibliography{sample}{}
\nocite{*}


\end{document}